\begin{document}
\sloppy

\def\dx{\mathrm{d}x}
\def\dh{\mathrm{d}h}

\def\erf{\mathrm{erf}}
\def\calN{\mathcal{N}}
\def\TV{\mathrm{TV}}
\def\KL{\mathrm{KL}}
\def\even{\mathrm{even}}

\def\inner#1#2{\langle #1 | #2 \rangle}

\def\eps{\epsilon}
\def\calF{\mathcal{F}}
\def\calR{\mathcal{R}}
\def\calE{\mathcal{E}}
\def\calX{\mathcal{X}}
\def\Pr{\mathrm{Pr}}
\def\supp{\mathrm{supp}}
\def\map{\mathrm{MAP}}
\def\TV{\mathrm{TV}}
\def\dnu{\mathrm{d}\nu}
\def\dmu{\mathrm{d}\mu}
\def\dt{\mathrm{d}t}

\def\Bi{\mathrm{Bi}}
\def\argmax{\mathrm{argmax}}
\def\argmin{\mathrm{argmin}}
\def\ceil#1{ {\lceil {#1} \rceil}  }
\def\KL{\mathrm{KL}}
\def\mattwo#1{\left[\begin{array}{ll}#1\end{array}\right]}
\def\supp{\mathrm{supp}}

\newtheorem{definition}{Definition}
\newtheorem{lemma}{Lemma}
\newtheorem{theorem}{Theorem}
\newtheorem{corollary}{Corollary}
\newtheorem{remark}{Remark}

\newenvironment{proof}[1][Proof]{\begin{trivlist}
\item[\hskip \labelsep {\bfseries #1}]}{\end{trivlist}}

\sloppy
 
\date{}

\title{Generalized Bhattacharyya and Chernoff upper bounds on Bayes error using quasi-arithmetic means}

\author{Frank Nielsen\thanks{Sony Computer Science Laboratories, Inc.
3-14-13 Higashi Gotanda, Shinagawa-Ku,
Tokyo 141-0022, Japan. {\tt Frank.Nielsen@acm.org}\ {\tt http://www.sonycsl.co.jp/person/nielsen/}. 
 Joseph-Louis Lagrange laboratory,
 Univ. Nice Sophia-Antipolis, CNRS, OCA,
  France.}\ \thanks{Accepted manuscript to appear in Pattern Recognition Letters (\protect\url{10.1016/j.patrec.2014.01.002}). \protect\url{http://www.journals.elsevier.com/pattern-recognition-letters/}. See \protect\url{http://www.journals.elsevier.com/theoretical-computer-science/news/manuscript-posting-in-arxiv/}}}

\maketitle 

\begin{abstract}
Bayesian classification labels observations based on given prior information, namely class-{\em a priori} and class-conditional probabilities.
Bayes' risk is the minimum expected classification cost that is achieved by the Bayes' test, the optimal decision rule. 
When no cost incurs for correct classification and unit cost is charged for misclassification, Bayes' test reduces to the maximum {\em a posteriori} decision rule, and Bayes risk  simplifies to Bayes' error, the probability of error. 
Since calculating this probability of error is  often intractable, several techniques have been devised to bound it with closed-form formula, introducing thereby measures of similarity and divergence between  distributions like the Bhattacharyya coefficient and its associated Bhattacharyya distance.
The  Bhattacharyya upper bound can  further be tightened using the Chernoff information that relies on the notion of best error exponent. 
In this paper, we first express Bayes' risk using the total variation distance on scaled distributions.
We then elucidate and extend the Bhattacharyya and the Chernoff upper bound mechanisms using generalized weighted means.
 We provide
 as a byproduct novel notions of statistical divergences and affinity coefficients.
We illustrate our technique by deriving new upper bounds for the univariate Cauchy and the multivariate $t$-distributions, and
show experimentally that those bounds are not too distant to the computationally intractable Bayes' error.
\end{abstract}

\noindent{\bf Key words:}
Affinity coefficient; divergence; Chernoff information; Bhattacharrya distance; total variation distance; quasi-arithmetic means;  Cauchy distributions;  multivariate $t$-distributions.

\def\dX{\mathrm{d}X}

\section{Introduction: Hypothesis testing, divergences and affinities}

\subsection{Hypothesis testing}

Consider the following fundamental {\em binary hypothesis testing} problem\footnote{We refer the reader to the textbooks~\cite{ct-1991,itincvpr-2009} for an information-theoretic background based on the method of types, and to the textbook~\cite{Fukunaga-1990} for the Bayesian setting often met in pattern recognition.}:
Let $X_1, ..., X_n$ be $n$ identically and independently distributed (IID) random variables following distribution $Q$ with support $\mathcal{X}$.
We consider two (simple) hypotheses:
\begin{eqnarray}
H_1 &:& Q\sim P_1 (\mbox{null hypothesis}),\\
H_2 &:& Q\sim P_2 (\mbox{alternative hypothesis})
\end{eqnarray}
and we design a {\em test} $g(X_1, ..., X_n): \mathcal{X}^n \rightarrow \{1,2\}$ to decide which hypothesis to select. 
The decision region $R_1\subseteq \mathcal{X}^n$ corresponds to the set of sequences $X^n=(X_1, ..., X_n)$ mapped to $H_1$, and the decision region $R_2=R_1^c$ is the complementary region.

To  illustrate this setting,  consider for example the task of distinguishing a texture~\cite{BayesianInference-2000}, modeled by edglets\footnote{An edgelet is a small line segment with slope quantized to take $d$ possible directions.} $X^n$ centered at 2D image lattice positions, from two textures $T_1$ and $T_2$, given by their respective edgelet probability distributions $P_1$ and $P_2$ (assuming the IID hypothesis). 
In practice, we {\em observe} a {\em texture sample}, that is a data set  $x^n=(x_1, ..., x_n)$ sampled from $X^n$.

There are two kinds of error~\cite{ct-1991} associated with any test:
\begin{itemize}
\item Type I error (misclassification when the true hypothesis is $H_1$): $\epsilon_1(n)= \Pr(g(X_1, ..., X_n)=2|H_1)$, and
\item Type II error (misclassification when the true hypothesis is $H_2$): $\epsilon_2(n)= \Pr(g(X_1, ..., X_n)=1|H_2)$
\end{itemize}
In target/noise detection theory~\cite{ct-1991}, a test is called a {\em detector}, and those type I and type II errors are respectively called probability of {\em false alarm} and probability of {\em miss}.

There are two main approaches for hypothesis testing that have been developed in the literature~\cite{ct-1991,Fukunaga-1990}:
The first Neyman-Pearson approach  seeks to minimize the probability of miss given the probability of false alarm, without any prior information for the hypothesis.
The second Bayesian approach makes use of prior information on class-{\em a priori} and class-conditional probabilities for the hypothesis.
We concisely review the links between hypothesis testing and statistical distances between distributions for the first non-Bayesian approach in Section~\ref{sec:DivHT}, and the links between  hypothesis testing and statistical similarities between distributions for the second Bayesian approach in Section~\ref{sec:SimHT}.
 
\subsection{Statistical divergences in hypothesis testing\label{sec:DivHT}}
 
The first approach asks to  minimize the probability of miss $\eps_2$ ({\em false negative}) given the probability of false alarm $\eps_1$ ({\em false positive}):

\begin{equation}
\min_{\epsilon_2} \epsilon_1\leq \varepsilon,
\end{equation}
where $\varepsilon$ is a prescribed error threshold.  
In the literature, the {\em significance level} (or size) of a test is $\epsilon_1$ and the {\em power} of a test is $1-\eps_2$.
Thus we seek to maximize the {\em power of a test} given a prescribed significance level.
A key result is the Neyman-Pearson lemma~\cite{ct-1991} which states the optimality of the {\em Likelihood Ratio Test} (LRT) (or equivalently its log-likelihood ratio):
\begin{equation}\label{eq:lrt}
\Lambda(X_1, ..., X_n) = \log \frac{P_1(X_1, ..., X_n)}{P_2(X_1, ..., X_n)} = \sum_{i=1}^n \log \frac{P_1(X_i)}{P_2(X_i)} \leq \lambda,
\end{equation}
to reject $H_1$ in favor of $H_2$ (with $\Pr(\Lambda(X_1, ..., X_n)\leq\lambda|H_1)=\epsilon_1$ and $\lambda=\lambda(\varepsilon)$). 
Note that the larger the log-likelihood ratio, the more probable the sequence $X^n$ comes from $P_1$ (and the more likely hypothesis $H_1$).
From the IDD assumption, we better analyze sequences via the {\em method of types}~\cite{BayesianInference-2000,itincvpr-2009}:
The type $h(x^n)$ is the empirical probability distribution of elements of $\mathcal{X}$ (say, a discrete alphabet with $d$ letters) met in $x^n$.
That is, for  alphabet $\mathcal{X}=\{E_1, ..., E_d\}$, the type $h(x^n)$ of a sample sequence is the frequency empirical histogram of elements: $h(x^n)=(h_1(x^n), ..., h_d(x^n))$, where $h_i(x^n)=\frac{1}{n}\sum_{j=1}^n \delta_{x_j,E_i}=\frac{\#\{x_j= E_i\ |\ j\in\{1, ..., n\}\}}{n}$.
Observe that although there are $d^n$ distinct sequences of length $n$ (that is, exponential in $n$), there is only a polynomial number of types (bounded by $(n+1)^d$ since each of the $d$ elements $E_i$ of $\mathcal{X}$ has counting number $n_i=n h_i(X^n)$, an integer between $0$ and $n$). 

The log-likelihood ratio of Eq.~\ref{eq:lrt} rewrites as:
\begin{equation}
\Lambda(x_1, ..., x_n) = \sum_{i=1}^n \log \frac{P_1(x_i)}{P_2(x_i)} =  \sum_{j=1}^d n h_j(x^n) \log \frac{P_1(E_j)}{P_2(E_j)}. 
\end{equation}
Observe that the log-likelihood ratio can be conveniently written as an inner product between two histograms $h(x^n)$ and 
$A_{12}=(\log  \frac{P_1(E_1)}{P_2(E_1)}, ..., \log  \frac{P_1(E_d)}{P_2(E_d)})$: $\Lambda(x_1, ..., x_n) =  n \inner{h(x^n)}{A_{12}}$.
In~\cite{BayesianInference-2000}, the inner product $ \inner{h(x^n)}{A_{12}}$ is called the {\em reward} of sequence sample $x^n$.

The {\em expected average log-likelihood ratios} with respect to $P_1$ and $P_2$ are:
\begin{eqnarray}
\frac{1}{n} E_{P_1}[\Lambda(X_1, ..., X_n)] &=&    \sum_{j=1}^d   P_1(E_j) \log \frac{P_1(E_j)}{P_2(E_j)}=\KL(P_1:P_2),\\
\frac{1}{n} E_{P_2}[\Lambda(X_1, ..., X_n)] &=&   \sum_{j=1}^d   P_2(E_j) \log \frac{P_1(E_j)}{P_2(E_j)}=-\KL(P_2:P_1),
\end{eqnarray}
where $\KL(P_1:P_2)= \sum_{i=1}^d P_1(E_i) \log\frac{P_1(E_i)}{P_2(E_i)}$ denotes the {\em Kullback-Leibler divergence} between distributions $P_1$ and $P_2$.
The difference between the two expected average log-likelihood ratio is the {\em Jeffreys divergence} $J(P_1,P_2)=\KL(P_1:P_2)+\KL(P_2:P_1)$, that symmetrizes the Kullback-Leibler divergence. 
It follows that those Kullback-Leibler and Jeffreys measures can be interpreted as {\em measures of separability}, that is distances between distributions.
Note that the KL distance is asymmetric: $\KL(P_1:P_2)\not = \KL(P_2:P_1)$.
The KL and J distances are not metric because they  violate the triangular inequality~\cite{ct-1991}. 

The probability that a sequence sample $x_1^n$ from $P_1$ has lower reward than a sequence sample $x_2^n$ from $P_2$ is bounded by (see~\cite{BayesianInference-2000}, Theorem~2):

\begin{equation}
  (n+1)^{-d^2} 2^{-n B(P_1,P_2)} \leq \Pr( \inner{h(x_1^n)}{A_{12}} \leq \inner{h(x_2^n)}{A_{12}} ) \leq  (n+1)^{d^2} 2^{-n B(P_1,P_2)},
\end{equation}
where $B(P_1,P_2)$ denotes the {\em Bhattacharrya divergence}:

\begin{equation}
B(P_1,P_2) = -\log \sum_{j=1}^d \sqrt{P_1(E_j)}  \sqrt{P_2(E_j)}.
\end{equation}

Although the Neyman-Pearson lemma~\cite{ct-1991} characterizes the optimal decision test, it does not specify the threshold $\lambda$.
The false alarm error $\epsilon_1$ and miss error $\epsilon_2$  probabilities decay exponentially as the sample size $n$ increases (see~\cite{BayesianInference-2000}, Theorem~1).
Thus in the asymptotic regime, we are rather interested in characterizing the {\em error exponents}  defined as the rate of that exponential decay:

\begin{equation}
\alpha =\lim_{n\rightarrow \infty} -\frac{1}{n} \log_2 \epsilon_1(n), \quad\beta =\lim_{n\rightarrow \infty} -\frac{1}{n} \log_2 \epsilon_2(n),
\end{equation}
where $\epsilon_1(n)=\Pr(g(X^n)=2|H_1)$ and $\epsilon_2(n)=\Pr(g(X^n)=1|H_2)$ (and $\epsilon_1(n)\approx 2^{-n\alpha}$ and $\epsilon_2(n)\approx 2^{-n\beta}$).
It turns out that when minimizing the asymptotic rate of {\em misclassification error} $P_e=\eps_1+\eps_2$, called the probability of error, the optimal threshold is $\lambda=0$, and the
error rate is the Chernoff information\cite{ct-1991,ChernoffGeometry-2013}: 

\begin{equation}
C(P_1,P_2) =   \min_{\alpha\in [0,1]}  B_\alpha(P_1:P_2), 
\end{equation}
where $B_\alpha$ denotes the {\em skewed Bhattacharrya divergence}:

\begin{equation}
B_\alpha(P_1:P_2) = -\log \sum_{j=1}^d P_1(E_j)^{\alpha}  P_2(E_j)^{1-\alpha},
\end{equation}
generalizing the Bhattacharrya divergence: $B(P_1,P_2)=B_{\frac{1}{2}}(P_1:P_2)$.

Those notions of statistical divergences can be extended to continuous distributions by replacing the discrete sum by an integral  (and interpreted $\mathcal{X}$ as a continous alphabet). 

We now consider the Bayesian paradigm in hypothesis testing, and show how to bound the probability of misclassification error using statistical similarity measures.

\subsection{Statistical similarities in hypothesis testing\label{sec:SimHT}}

The Bayesian framework of hypothesis testing assumes that we are given prior beliefs over the probabilities of the two hypothesis, and we seek to minimize the {\em expected probability of error} (also called {\em error probability}): $P_e=\epsilon_1 \Pr(H_1) + \epsilon_2 \Pr(H_2)$.
In this setting, both the  {\em class a priori} ($w_i>0$) and the {\em class conditional} probabilities ($p_i$) are known beforehand (or estimated from a training labeled sample~\cite{Fukunaga-1990}). 
Let $q_1$ and $q_2$ be the {\em a posteriori} probabilities derived from Bayes theorem:
\begin{equation}
q_i(x) = \frac{w_i p_i(x)}{p(x)}, 
\end{equation}
where $p(x)$ is the mixture density $p(x)=w_1 p_1(x)+ w_2 p_2(x)$ (and $w_1+w_2=1$).
Let $C=[c_{ij}]$ be the  $2\times 2$ {\em design matrix},  with $c_{ij}$  denoting the {\em cost} of deciding  $x\in C_i$ when $x\in C_j$, with $1\leq i,j\leq 2$.
Furthermore, denote by 
\begin{equation}
r_1(x)=c_{11}q_1(x)+c_{12}q_2(x),
\end{equation}
and  
\begin{equation}
r_2(x)=c_{21}q_1(x)+c_{22}q_2(x),
\end{equation}
the respective {\em conditional costs} of deciding $x\in C_i$, for $i\in\{1,2\}$.
To classify $x$, consider the decision rule: 
\begin{equation}\label{eq:dr}
r_1(x)  \mathop{\lesseqgtr}^{C_1}_{C_2} r_2(x).
\end{equation}

The conditional cost of this decision rule is:

\begin{equation}
r(x)=\min(r_1(x),r_2(x)).
\end{equation}

{\em Bayes error} $B_e$ (see~\cite{Fukunaga-1990}, p. 57)  is defined as the {\em expected} cost of this decision rule: 

\begin{eqnarray}
B_e&=& E_p[r(x)],\\
&=& \int p(x) \min(r_1(x),r_2(x))\dx,  \label{eq:bemin}\\
&=&
\int_{R_1} (c_{11}w_1p_1(x)+c_{12}w_2p_2(x)) \dx
+
\int_{R_2} (c_{21}w_1p_1(x)+c_{22}w_2p_2(x)) \dx,
\end{eqnarray}
where 
$R_1=\{x\ |\ r_1(x)\leq r_2(x) \}$ and $R_2=\{x\ |\ r_2(x)\leq r_1(x) \}$ are the {\em decision regions} induced by the decision rule of Eq.~\ref{eq:dr}.
Thus Bayes test for minimum cost writes as:

\begin{equation}
(c_{12}-c_{22}) w_2 p_2(x) \mathop{\gtreqless}^{C_1}_{C_2}   (c_{21}-c_{11})w_1p_1(x)
\end{equation}
or equivalently:

\begin{equation}
\frac{p_1(x)}{p_2(x)}  \mathop{\lesseqgtr}^{C_1}_{C_2}  \frac{w_2(c_{12}-c_{22})}{w_1(c_{21}-c_{11})}.
\end{equation}

The term $l(x)=\frac{p_1(x)}{p_2(x)}$ is called the {\em likelihood ratio}. 
It is equivalent and often mathematically simpler to consider the test
using the log-likelihood ratio (e.g., think of the multivariate Gaussian class-conditional probabilities):

\begin{equation}
\log p_1(x)-\log p_2(x) \mathop{\lesseqgtr}^{C_1}_{C_2} \log \frac{w_2(c_{12}-c_{22})}{w_1(c_{21}-c_{11})}
\end{equation}

The function $h(x)=\log p_1(x)-\log p_2(x)-\frac{w_2(c_{12}-c_{22})}{w_1(c_{21}-c_{11})}$ is called the {\em discriminant function}.

For {\em symmetrical cost} function $c_{21}-c_{11}=c_{12}-c_{22}$, 
the expected cost is called the {\em probability of error} $P_e$.
For the probability of error, we do not incur a cost for correctly classifying and pay a unit cost for misclassification.
The design matrix for the probability of error is therefore:
\begin{equation}
C=\left[\begin{array}{ll}0 & 1\\ 1 & 0\end{array}\right].
\end{equation} 
As stated earlier, the probability of error can be decomposed as the sum of two misclassification costs:

\begin{equation}
P_e =  w_1 \epsilon_1 + w_2 \epsilon_2,
\end{equation}
with
\begin{equation}
\epsilon_1 = \int_{R_2} p_1(x)\dx,\qquad \epsilon_2=\int_{R_1} p_2(x)\dx,
\end{equation}
with $R_1=\{w_2p_2(x) \leq w_1p_1(x)  \}$ and $R_2=\{x \ |\ w_1p_1(x) \leq w_2p_2(x) \}$.

In practice, Bayes error and the probability of error are quite tricky to calculate as we need to compute integrals on decision regions $R_1$ and $R_2$.
Even if those domains can be expressed simply, say for  Gaussians, it is often intractable to compute analytically those integrals (e.g., for multivariate class-conditional probabilities). See the Appendix for a review of formula when class-conditional distributions are Gaussians.
Therefore, we need to set good lower and upper bounds to characterize Bayes error $B_e$ (or the  probability of error $P_e$).

A first upper bound on the probability of error $P_e$ is the Bhattacharrya bound~\cite{ProbabilityErrorChernoff-1970}:

\begin{equation}\label{eq:pebhat}
P_e \leq \sqrt{w_1w_2}\times  \rho(p_1,p_2),
\end{equation}  
with $\rho(p_1,p_2)=\int \sqrt{p_1(x)p_2(x)} \dx$ denoting the {\em Bhattacharrya coefficient}.
This first upper bound was tightened by Chernoff~\cite{Chernoff-1952} as follows: We have

\begin{equation}\label{eq:pechernoffalpha}
P_e \leq w_1^{\alpha}w_2^{1-\alpha}\times  \rho_\alpha(p_1,p_2),
\end{equation}  
with $ \rho_\alpha(p_1,p_2)=\int (p_1(x))^{\alpha} (p_2(x))^{1-\alpha}   \dx$, the {\em $\alpha$-Chernoff coefficient} defined for $\alpha\in(0,1)$.
Therefore the tightest upper bound is:

\begin{equation}\label{eq:pechernoff}
P_e \leq w_1^{\alpha^*}w_2^{1-\alpha^*}\times  \rho_*(p_1,p_2),
\end{equation}  
with $\rho_*$ the {\em Chernoff coefficient} obtained from the following minimization problem:

\begin{equation}
\rho_*(p_1,p_2)=\min_{\alpha\in [0,1]} \rho_\alpha(p_1,p_2),
\end{equation}
where $\alpha^*\in[0,1]$ denotes the optimal value that minimizes $\rho_\alpha(p_1,p_2)$.  
Note that the Bhattacharyya  coefficient is a particular case of the Chernoff $\alpha$-coefficient (obtained for $\alpha=\frac{1}{2}$): $\rho=\rho_{\frac{1}{2}}$.

The Bhattacharyya, $\alpha$-Chernoff and Chernoff coefficients $\rho, \rho_\alpha$ and $\rho_*=\rho_{\alpha^*}$ can be interpreted a 
{\em similarity measures} between distributions defined by measuring the degree of overlap of their densities.
Those coefficients are also called affinities.

\begin{remark}[Affinity coefficients and divergences]
The Chernoff $\alpha$-coefficient $\rho_\alpha$, the Chernoff coefficient $\rho_*$  and the Bhattacharrya coefficient $\rho$ ($\rho=\rho_{\frac{1}{2}}$) provide upper bounds on $P_e$: $0<P_e\leq \frac{1}{2}\rho_* \leq \frac{1}{2}\rho \leq \frac{1}{2}$ 
(for $w_1=w_2=\frac{1}{2}$, see Eq.~\ref{eq:pechernoff}, and Eq.~\ref{eq:pebhat}). 
We can transform any affinity coefficient $0<A\leq 1$ into a corresponding divergence by applying a monotonously increasing function $f$ to $\frac{1}{A}$ (with $\frac{1}{A} \in [1,\infty)$). By choosing $f(x)=\log(x)$, we end-up with the traditional divergences between statistical distributions:
$\alpha$-Chernoff divergence, Chernoff divergence (also called Chernoff information), and Bhattacharrya divergence. 
\end{remark}

Recall that those upper bounds are useful if they can be computed easily from closed-form formula (which is not the case of $B_e$ nor $P_e$).

\subsection{Closed-form Bhattacharrya/Chernoff coefficients for exponential families}

Many usual distributions like Gaussians, Poisson, Dirichlet or Gamma/Beta, etc. distributions are exponential families in disguise~\cite{Kailath-1967,2011-brbhat} for which the {\em skewed affinity coefficient} $\rho_\alpha$ (i.e., similarity distance within $[0,1]$)
can be computed in closed-form.
An exponential family is a family $\mathcal{F}$ of distributions: 
\begin{equation}
\mathcal{F} = \{  p(x;\theta)=\exp (x^\top \theta-F(\theta))\ |\ \theta\in\Theta \},
\end{equation}
indexed by a parameter $\theta\in \Theta$. 
Space $\Theta$ is the parameter domain, called the natural parameter space~\cite{2011-brbhat}.
$F$ is a strictly convex and differentiable convex function called the log-normalizer.
For example, the multivariate normal (MVN) distributions of mean $\mu$ and covariance matrix $\Sigma$ are exponential families for parameter $\theta=(\theta_1=\Sigma^{-1}\mu,\theta_2=\frac{1}{2}\Sigma^{-1})\in\mathbb{R}^d\times S_d+$ (where $S_d^+$ denotes the space of symmetric positive definite $d\times d$ matrices). The function $F$ (called log-normalizer) expressed in the natural coordinate system is~\cite{ChernoffGeometry-2013}:

\begin{equation}
F_{\mathrm{MVN}}(\theta_1,\theta_2) = \frac{1}{2} \theta_1^\top \theta_2^{-1}\theta_1 -\log |\theta_2|,
\end{equation}
where $|\cdot|$ denotes the determinant for a matrix operand.

Wlog., let the class-conditional probabilities $p_1$ and $p_2$ belong to the same exponential family, then we have~\cite{Kailath-1967,2011-brbhat}:

\begin{equation}
\rho_\alpha(p_1,p_2)= e^{-J_F^{(\alpha)}(\theta_1,\theta_2)},
\end{equation}
where $J_F^{(\alpha)}(\theta_1,\theta_2)$ denote the {\em Jensen skewed divergence}~\cite{2011-brbhat}:

\begin{equation}
J_F^{(\alpha)}(\theta_1,\theta_2)=\alpha F(\theta_1)+(1-\alpha)F(\theta_2)-F(\alpha \theta_1+(1-\alpha)\theta_2)\geq 0.
\end{equation}

\def\MVN{\mathrm{MVN}}

For example, let us consider the multivariate Gaussian family. Then, we get the Chernoff $\alpha$-coefficient:

\begin{equation}\label{eq:rhomvn}
\rho_\alpha^{\MVN}(p_1,p_2)= \frac{|\Sigma_1|^\frac{\alpha}{2} |\Sigma_2|^{\frac{1-\alpha}{2}}}{|\alpha\Sigma_1+(1-\alpha)\Sigma_2|^{\frac{1}{2}}}
\exp\left(-\frac{\alpha(1-\alpha)}{2}\Delta\mu^\top  (\alpha\Sigma_1+(1-\alpha)\Sigma_2)\Delta\mu\right),
\end{equation}
with $\Delta\mu=\mu_2-\mu_1$.
Therefore the Chernoff $\alpha$-divergence, $D_\alpha^{\MVN}(p_1,p_2) =-\log \rho_\alpha^{\MVN}(p_1,p_2)$:
\begin{eqnarray}
 D_\alpha^{\MVN}(p_1,p_2) = \frac{1}{2} \log \frac{|\alpha\Sigma_1+(1-\alpha)\Sigma_2|}{|\Sigma_1|^\alpha |\Sigma_2|^{1-\alpha}}
 +\frac{\alpha(1-\alpha)}{2} \Delta\mu^\top (\alpha\Sigma_1+(1-\alpha)\Sigma_2)\Delta\mu
\end{eqnarray}
Setting $\alpha=\frac{1}{2}$, we get the {\em Bhattacharrya divergence}.

In general, we do not have a closed-form solution for finding the optimal $\alpha^*$ yielding the Chernoff coefficient/information.
Nevertheless, the optimal value $\alpha^*$ of $\alpha$ can be exactly characterized~\cite{ChernoffGeometry-2013} using the differential-geometric structure of the statistical manifold of the class-conditional distributions, and yields a fast algorithm to arbitrarily finely approximate Chernoff information $\rho_*$ for members of the same exponential family.

\subsection{Outline}
This paper is organized as follows: In Section~\ref{sec:TV} we show how Bayes error is related to the total variation distance.
Section~\ref{sec:ChernoffUB} presents our generalization of Bhattacharrya and Chernoff upper bounds relying on generalized weighted means.
Section~\ref{sec:apps} illustrates several applications of the technique yielding novel upper bounds for various distributions that do not belong to the exponential families, and Section~\ref{sec:tightness} studies the tightness of those bounds.
Finally, Section~\ref{sec:concl} concludes this work.
Appendix~\ref{app:mvn} recalls the Bayes error  formula when class-conditional distributions belong to the univariate or the multivariate Gaussian families.

\section{Bayes error and the total variation distance: An identity\label{sec:TV}}

Recall Bayes error expression of Eq.~\ref{eq:bemin}:

\begin{eqnarray}
B_e&=& \int p(x) \min(r_1(x),r_2(x))\dx\\
&=&\int p(x) \min(c_{11}q_1(x)+c_{12}q_2(x) , c_{21}q_1(x)+c_{22}q_2(x))\dx 
\end{eqnarray}
with $q_1(x)=\frac{w_1p_1(x)}{p(x)}$ and $q_2(x)=\frac{w_2p_2(x)}{p(x)}$ the {\it a posteriori probabilities}.
Using the mathematical rewriting trick: 
\begin{equation}
\min(a,b)=\frac{a+b}{2}-\frac{1}{2}|b-a|,
\end{equation}
we get:

\begin{equation}
B_e =\frac{1}{2} \int \left( 
a_1 p_1(x) + a_2 p_2(x) - |a_2 p_2(x) -  a_1 p_1(x)|
\right)  \dx,
\end{equation}
where $a_1=w_1(c_{11}+c_{21})$ and $a_2=w_2(c_{12}+c_{22})$.
Finally, using the fact that $\int p_1(x)\dx=\int p_2(x)\dx=1$, we end up with:
\begin{equation}
B_e=\frac{a_1+a_2}{2} - \TV(a_1p_1, a_2p_2),
\end{equation}
where 
\begin{equation}
\TV(p,q)=\frac{1}{2}\int |p(x)-q(x)|\dx,
\end{equation}
 denotes the {\em total variation distance} extended to {\em positive distributions} (i.e., not necessarily normalized probability distributions).
In particular, for the probability of error, we have $a_1=w_1$ and $a_2=w_2$ (with $a_1+a_2=1$) and get:

\begin{equation}
P_e = \frac{1}{2}-\TV(w_1 p_1, w_2 p_2).
\end{equation}
Note that $\TV(w_1 p_1,w_2 p_2)=w_1\TV(p_1,\frac{w_2}{w_1}p_2)=w_2\TV(\frac{w_1}{w_2}p_1,p_2)$.
Therefore in the special case $w_1=w_2=\frac{1}{2}$, we get the probability of error related to the total variation distance (a metric) on probability distributions by:

\begin{equation}\label{eq:petv}
P_e = \frac{1}{2} (1-\TV(p_1,p_2)).
\end{equation}
For sanity check, notice that when $p_1=p_2$ (undistinguishable distributions), we have $\TV(p_1,p_2)=0$ and $P_e=\frac{1}{2}$.
Clearly, $0\leq \TV(p_1,p_2)\leq 1$ and $0\leq P_e\leq \frac{1}{2}$.
We summarize the result in the following theorem:

\begin{theorem}
The Bayes error $B_e$ for the cost design matrix $C=[c_{ij}]$ is related to the total variation metric distance $\TV(p,q)=\frac{1}{2}\int |p(x)-q(x)|\dx$ by $B_e=\frac{a_1+a_2}{2} - \TV(a_1p_1, a_2p_2)$ with $a_1=w_1(c_{11}+c_{21})$ and $a_2=w_2(c_{12}+c_{22})$.
The identity simplifies for probability of error $P_e$ to $P_e=\frac{1}{2}-\TV(w_1 p_1, w_2 p_2)$. 
\end{theorem}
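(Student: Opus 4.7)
The plan is to start from the integral representation of Bayes error and then convert the pointwise minimum into a combination of a sum and an absolute-value term, which is exactly what the total variation distance packages up. Concretely, I would begin with
\[
B_e = \int p(x)\min(r_1(x),r_2(x))\dx = \int \min\bigl(p(x)r_1(x),\,p(x)r_2(x)\bigr)\dx,
\]
and substitute $r_i(x)=c_{i1}q_1(x)+c_{i2}q_2(x)$ together with $p(x)q_i(x)=w_i p_i(x)$. This turns each branch of the minimum into a linear combination of $p_1(x)$ and $p_2(x)$ with explicit coefficients involving the $w_i$ and $c_{ij}$.

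Next I would apply the elementary identity $\min(a,b)=\tfrac{a+b}{2}-\tfrac{1}{2}|b-a|$ pointwise inside the integral. After collecting like terms, the ``sum'' part is a linear combination of $p_1$ and $p_2$ whose coefficients are precisely $a_1=w_1(c_{11}+c_{21})$ and $a_2=w_2(c_{12}+c_{22})$, and the ``difference'' part is $|a_2 p_2(x)-a_1 p_1(x)|$. Using $\int p_1\dx = \int p_2\dx = 1$, the sum part integrates to $\tfrac{a_1+a_2}{2}$, and by definition of the extended total variation distance on positive measures the difference part integrates to $\TV(a_1 p_1,a_2 p_2)$. This yields the claimed identity
\[
B_e = \frac{a_1+a_2}{2}-\TV(a_1 p_1,a_2 p_2).
\]

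For the second assertion, I would simply specialize to the $0$--$1$ cost matrix with $c_{11}=c_{22}=0$ and $c_{12}=c_{21}=1$, which gives $a_1=w_1$ and $a_2=w_2$, hence $\frac{a_1+a_2}{2}=\frac{1}{2}$ and the general identity collapses to $P_e=\frac{1}{2}-\TV(w_1 p_1,w_2 p_2)$.

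There is really no hard step here: the only care needed is bookkeeping when regrouping the two branches of the minimum so that the coefficients $a_1,a_2$ come out in the stated symmetric form, and confirming that the definition of $\TV$ extended to unnormalized positive densities matches $\tfrac{1}{2}\int|a_1 p_1-a_2 p_2|\dx$. The mild subtlety worth flagging is that $\TV$ is being applied to two measures of total masses $a_1$ and $a_2$ rather than to probability distributions, which is why the leading constant is $\tfrac{a_1+a_2}{2}$ instead of $\tfrac{1}{2}$, and why the identity reduces cleanly only when $a_1+a_2=1$, as happens for $P_e$.
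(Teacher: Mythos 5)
Your proposal follows the paper's own derivation step for step: same starting point $\int p(x)\min(r_1(x),r_2(x))\dx$, same identity $\min(a,b)=\tfrac{a+b}{2}-\tfrac{1}{2}|b-a|$, same use of the normalizations $\int p_i\dx=1$, and the same specialization to the $0$--$1$ cost matrix for $P_e$. So in terms of approach there is nothing to distinguish the two.

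However, the one step you yourself flag as ``the only care needed'' --- the bookkeeping when regrouping the two branches --- is precisely where the argument does not go through for a general cost matrix, and you assert the conclusion without checking it. Writing $A=p(x)r_1(x)=c_{11}w_1p_1(x)+c_{12}w_2p_2(x)$ and $B=p(x)r_2(x)=c_{21}w_1p_1(x)+c_{22}w_2p_2(x)$, the sum indeed regroups as $A+B=a_1p_1(x)+a_2p_2(x)$ with $a_1=w_1(c_{11}+c_{21})$ and $a_2=w_2(c_{12}+c_{22})$, but the difference regroups with the cost \emph{differences}, not the cost sums:
\begin{equation*}
B-A \;=\; w_1(c_{21}-c_{11})\,p_1(x)\;-\;w_2(c_{12}-c_{22})\,p_2(x),
\end{equation*}
so the absolute-value term integrates to $\TV\bigl(w_1(c_{21}-c_{11})p_1,\,w_2(c_{12}-c_{22})p_2\bigr)$, not to $\TV(a_1p_1,a_2p_2)$. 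The two agree exactly when $c_{11}=c_{22}=0$ (no cost for correct classification), which is the standard setting and in particular covers the probability-of-error case, where $a_1=w_1$, $a_2=w_2$ and the identity $P_e=\tfrac{1}{2}-\TV(w_1p_1,w_2p_2)$ is correct. This discrepancy is present in the paper's own proof as well (the paper silently substitutes $a_1,a_2$ into the absolute value), so you have faithfully reproduced the argument; but as a self-contained proof of the theorem \emph{as stated for arbitrary} $C=[c_{ij}]$, the regrouping step fails, and an honest write-up should either restrict to $c_{11}=c_{22}=0$ or state the general identity with the difference coefficients inside the $\TV$ term.
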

 
Thus if we can compute the total variation distance of class-conditional probabilities $p_1$ and $p_2$, we can deduce the probability of error, and vice-versa: 
\begin{equation}
\TV(p_1,p_2)=1-2P_e\geq 0.
\end{equation}

\section{Upper bounds with  generalized means\label{sec:ChernoffUB}}

Without loss of generality, consider the probability of error $P_e$:

\begin{equation}
P_e = \int \min(w_1 p_1(x), w_2 p_2(x)) \dx = S(w_1 p_1,w_2 p_2).
\end{equation}
The probability of error can be interpreted as a {\em similarity measure} $S(w_1 p_1,w_2 p_2)$, extending the definition of {\em histogram intersection}~\cite{KernelHistogramIntersection-2009} to continuous domains. 

Chernoff~\cite{Chernoff-1952} made use of the following mathematical trick: 

\begin{equation}
\min(a,b)\leq a^\alpha b^{1-\alpha}, \forall a,b>0
\end{equation}
to define the {\em Chernoff information} upper bounding $P_e$:

\begin{equation}
P_e \leq w_1^{\alpha^*}w_2^{1-\alpha^*} \rho_*(p_1,p_2),
\end{equation}  
with
$$
\rho_*(p_1,p_2)=\min_{\alpha\in [0,1]} \rho_\alpha(p_1,p_2), \qquad \rho_\alpha(p_1,p_2)=\int (p_1(x))^{\alpha} (p_2(x))^{1-\alpha}   \dx.
$$

We shall revisit this technique using the wider scope of {\em generalized weighted means}.

By definition, a mean $M(a,b)$ is a {\em smooth} function such that $\min(a,b)\leq M(a,b) \leq \max(a,b)$.
Similarly, we can define a weighted mean as a smooth function $M(a,b;\alpha)$ that fulfills the interness property
$M(a,b;\alpha)\in[\min(a,b),\max(a,b)]\ \forall \alpha\in[0,1]$. 
Let us consider the {\em quasi-arithmetic means} (also called Kolmogorov-Nagumo $f$-means~\cite{mean-kolmogorov-1930,mean-nagumo-1930}) for a strictly monotonous generator function $f$:

\begin{lemma}[\cite{meanvalueaxiomatization-1948}]
The quasi-arithmetic weighted mean $M_f(a,b;\alpha)=f^{-1}(\alpha f(a)+(1-\alpha)f(b))$ of two real values $a$ and $b$ for a strictly monotonic function $f$ satisfies the interness property: $\min(a,b) \leq M_f(a,b;\alpha)\leq \max(a,b)$.
\end{lemma}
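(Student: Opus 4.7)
The plan is to reduce the claim to the elementary fact that a convex combination of two real numbers lies between them, then transport this back through $f^{-1}$. Without loss of generality I would assume $a \leq b$, so that $\min(a,b) = a$ and $\max(a,b) = b$. The assertion then becomes $a \leq f^{-1}(\alpha f(a) + (1-\alpha) f(b)) \leq b$ for all $\alpha \in [0,1]$.

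I would split on the direction of monotonicity of $f$. If $f$ is strictly increasing, then $f(a) \leq f(b)$, and since $\alpha \in [0,1]$ the convex combination satisfies $f(a) \leq \alpha f(a) + (1-\alpha) f(b) \leq f(b)$. Applying the strictly increasing inverse $f^{-1}$ preserves the inequalities, giving $a \leq M_f(a,b;\alpha) \leq b$. If instead $f$ is strictly decreasing, then $f(b) \leq f(a)$, so the convex combination lies in $[f(b),f(a)]$; applying $f^{-1}$, which is now decreasing, reverses the inequalities and yields the same conclusion $a \leq M_f(a,b;\alpha) \leq b$.

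There is no real obstacle here: the only ingredients are strict monotonicity of $f$ (guaranteeing that $f^{-1}$ exists on the image of $f$ and is itself strictly monotonic in the same direction), the fact that $[\min(f(a),f(b)),\max(f(a),f(b))]$ is convex so contains any convex combination, and the order-preserving or order-reversing behavior of $f^{-1}$. I would end the proof by noting the boundary cases $\alpha = 0$ and $\alpha = 1$ recover $M_f(a,b;0) = b$ and $M_f(a,b;1) = a$, confirming that the bounds are tight and justifying the term ``weighted mean.''
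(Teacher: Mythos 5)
Your argument is correct and follows essentially the same route as the paper's own proof: assume an ordering of $a$ and $b$, observe that the convex combination $\alpha f(a)+(1-\alpha)f(b)$ lies between $f(a)$ and $f(b)$, and pull back through the (strictly monotonic) inverse $f^{-1}$, treating the increasing and decreasing cases separately. The only addition is your closing remark on the endpoints $\alpha=0,1$, which is accurate but not needed for the claim.
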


\begin{proof}
Assume $f$ is strictly increasing and $a\leq b$, then $f(a)\leq f(b)$ and $f(a)\leq \alpha f(a)+(1-\alpha)f(b)\leq f(b)$.
Thus $a\leq M_f(a,b;\alpha)\leq b$. If $b\leq a$, we similarly have $b\leq M_f(a,b;\alpha)\leq a$. Therefore
$\min(a,b) \leq M_f(a,b;\alpha)\leq \max(a,b)$. The proof is identical when $f$ is strictly decreasing.
\end{proof}

Interestingly, one important property of quasi-arithmetic means is their {\em dominance relationship}.
That is, if $f(x)\leq g(x)$ then $M_f(a,b;\alpha) \leq M_g(a,b;\alpha)$  (with equality when $a=b$). 
This property generalizes the well known arithmetic-geometric-harmonic (AGH) inequality property of Pythagorean means:

\begin{equation}
M_{f_a}(a,b;\alpha) \geq M_{f_g}(a,b;\alpha) \geq M_{f_h}(a,b;\alpha),
\end{equation}
with $f_a(x)=x$, $f_g(x)=\log x$ and $h(x)=1/x$ denoting the generators for the arithmetic, geometric, and harmonic means, respectively.
That is, we have:
\begin{equation}
\alpha a+(1-\alpha)b \geq  a^{\alpha}b^{1-\alpha} \geq \frac{ab}{\alpha a+(1-\alpha)b}.
\end{equation}

Similarly to Chernoff~\cite{Chernoff-1952}, we define the following {\em generalized affinity coefficient} $\rho_f$ using generalized weighted means as follows:

\begin{definition}\label{def:rhof}
The Chernoff-type similarity coefficient (affinity) for a strictly monotonous function $f$ is defined by:
\begin{equation}\label{eq:genrho}
\rho_*^f (p_1,p_2)= \min_{\alpha\in [0,1]} \int M_f(p_1(x), p_2(x); \alpha) \dx\leq \int p_1(x)\dx = 1,
\end{equation}
\end{definition}
and define the generalized Chernoff information as:
\begin{definition}
The Chernoff-type information for a strictly monotonous function $f$ is defined by:
\begin{equation}
C_f(p_1,p_2) =-\log \rho_*^f(p_1,p_2)= \max_{\alpha\in [0,1]} -\log  \int M_f(p_1(x), p_2(x); \alpha) \dx \geq 0.
\end{equation}
\end{definition}

\begin{corollary}
The traditional Chernoff similarity, information, and upper bound are obtained by choosing the weighted geometric mean, by setting the generator $f_{\mathrm{Chernoff}}(x)=\log(x)$ (with $f^{-1}_{\mathrm{Chernoff}}(x)=\exp(x)$).
We get $M_{f_{\mathrm{Chernoff}}}(p_1(x),  p_2(x); \alpha)= p_1(x)^\alpha p_2(x)^{1-\alpha}$.
\end{corollary}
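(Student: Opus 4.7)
The plan is straightforward: this corollary is essentially a verification that the generalized framework of Definition~\ref{def:rhof} specializes to the classical Chernoff setup when we plug in the generator $f(x)=\log x$, so the proof reduces to a direct substitution followed by checking that the upper bound inequality of Chernoff is recovered from the interness property already established in the preceding lemma.

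First I would substitute $f(x)=\log x$ and $f^{-1}(x)=\exp(x)$ into the definition of the quasi-arithmetic weighted mean $M_f(a,b;\alpha)=f^{-1}(\alpha f(a)+(1-\alpha)f(b))$. This gives $M_f(a,b;\alpha)=\exp(\alpha \log a+(1-\alpha)\log b)=a^\alpha b^{1-\alpha}$, which yields $M_{f_{\mathrm{Chernoff}}}(p_1(x),p_2(x);\alpha)=p_1(x)^\alpha p_2(x)^{1-\alpha}$ pointwise in $x$, as claimed. I should briefly note that $f=\log$ is strictly monotonic on $(0,\infty)$, so the quasi-arithmetic mean is well-defined on positive densities.

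Next I would substitute this pointwise identity into Definition~\ref{def:rhof} to get
\begin{equation}
\rho_*^{f_{\mathrm{Chernoff}}}(p_1,p_2)=\min_{\alpha\in[0,1]}\int p_1(x)^\alpha p_2(x)^{1-\alpha}\,\dx=\min_{\alpha\in[0,1]}\rho_\alpha(p_1,p_2)=\rho_*(p_1,p_2),
\end{equation}
which is exactly the traditional Chernoff coefficient. Applying $-\log$ then recovers the traditional Chernoff information $C_{f_{\mathrm{Chernoff}}}(p_1,p_2)=-\log\rho_*(p_1,p_2)$.

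Finally, for the upper-bound claim, I would combine the interness property from the preceding lemma, $\min(a,b)\leq M_f(a,b;\alpha)$, evaluated with $a=w_1 p_1(x)$ and $b=w_2 p_2(x)$, together with the scaling $M_f(w_1 p_1(x),w_2 p_2(x);\alpha)=w_1^\alpha w_2^{1-\alpha}\,p_1(x)^\alpha p_2(x)^{1-\alpha}$ (which holds because the weighted geometric mean is positively homogeneous in its two arguments). Integrating in $x$ and minimizing over $\alpha\in[0,1]$ then gives $P_e\leq w_1^{\alpha^*}w_2^{1-\alpha^*}\rho_*(p_1,p_2)$, which is the classical Chernoff upper bound. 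There is no real obstacle here — the only subtlety worth flagging is the homogeneity step, which is specific to the log generator and will not hold for general $f$; this is precisely the point the subsequent sections will need to address when extending the bound to other quasi-arithmetic means.
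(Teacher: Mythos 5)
Your proposal is correct and follows essentially the same route as the paper, which treats this corollary as an immediate substitution of $f=\log$, $f^{-1}=\exp$ into the quasi-arithmetic mean and then factors out $w_1^{\alpha}w_2^{1-\alpha}$ by homogeneity exactly as you do (the paper carries out this last step explicitly at the start of Section~4.1). Your closing observation that the homogeneity step is special to the log generator matches the paper's own remark about needing a homogeneous $f$ to slide the prior weights out of the integral.
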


When we do not optimize over the parameter $\alpha$, but assume it fixed to $\frac{1}{2}$, we extend the Bhattacharyya coefficient and divergence as follows:
\begin{definition}\label{def:genbhat}
The generalized skew Bhattacharyya-type similarity coefficient (affinity) for a strictly monotonous function $f$ is defined by:
\begin{equation}\label{eq:genrhoalpha}
\rho^f_{\alpha} (p_1,p_2)=  \int M_f(p_1(x), p_2(x); \alpha) \dx\leq \int p_1(x)\dx = 1,
\end{equation}
and the generalized skew Bhattacharyya-type divergence is defined as $B^f_{\alpha}=-\log \rho^f_{\alpha} (p_1,p_2)$.
The  generalized Bhattacharyya coefficient $\rho^f(p_1,p_2) =\int M_f(p_1(x), p_2(x); \frac{1}{2}) \dx$ and divergence  $B^f(p_1,p_2)=-\log\rho^f(p_1,p_2)$.
\end{definition}

\begin{theorem}
Using quasi-arithmetic means, we  can bound the probability of error as follows:
\begin{equation}\label{eq:genpe}
P_e = \int \min(w_1 p_1(x),w_2 p_2(x))\dx \leq \int M_f(w_1p_1(x), w_2p_2(x); \alpha) \dx. 
\end{equation}
The upper bound proves useful for well-chosen $f$ yielding closed-form expression of the rhs.
\end{theorem}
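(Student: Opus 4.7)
The plan is to obtain the bound as an immediate pointwise consequence of the interness property already established in the preceding lemma, and then integrate. First, I would recall from the section on total variation (or directly from the definition of the probability of error given in the introduction) that
\[
P_e = \int \min(w_1 p_1(x), w_2 p_2(x))\,\dx,
\]
which follows from the expression $P_e = w_1\epsilon_1 + w_2\epsilon_2$ by observing that the Bayes-optimal decision rule assigns each point $x$ to the class achieving $\max(w_1p_1(x),w_2p_2(x))$, so the contribution at $x$ to the error integral is exactly $\min(w_1p_1(x),w_2p_2(x))$.

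Next, I would invoke the interness lemma with $a = w_1 p_1(x)$ and $b = w_2 p_2(x)$, both nonnegative, for each fixed $x \in \mathcal{X}$ and each fixed $\alpha \in [0,1]$. This yields the pointwise inequality
\[
\min(w_1 p_1(x), w_2 p_2(x)) \leq M_f\bigl(w_1 p_1(x), w_2 p_2(x); \alpha\bigr).
\]
Since this inequality holds for every $x$ in the support, monotonicity of the Lebesgue integral preserves it under integration over $\mathcal{X}$, giving
\[
P_e \leq \int M_f\bigl(w_1 p_1(x), w_2 p_2(x); \alpha\bigr)\,\dx,
\]
which is exactly the claimed bound. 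A minor technical point to address is that the interness lemma is stated for strictly monotonic $f$ on real arguments, so one should assume $f$ is defined on a domain containing $[0, \max(w_1, w_2)]$ (or more generally on the range of the weighted densities), which is a standing hypothesis on the generator $f$.

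There is no genuine obstacle in the proof itself, since it reduces to applying the lemma pointwise and then integrating. The substantive content of the theorem lies not in the inequality but in its \emph{utility}: the right-hand side is only interesting when the quasi-arithmetic mean $M_f$ is chosen so that $\int M_f(w_1 p_1, w_2 p_2; \alpha)\,\dx$ admits a closed-form expression for the class-conditional densities at hand. This is what the rest of the paper develops --- selecting generators $f$ tailored to distributions (such as Cauchy and multivariate $t$) that lie outside the exponential-family regime where the classical Chernoff choice $f(x) = \log x$ delivers a tractable formula.
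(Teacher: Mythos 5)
Your proof is correct and follows exactly the route the paper intends: the theorem is an immediate consequence of the interness lemma applied pointwise with $a = w_1 p_1(x)$ and $b = w_2 p_2(x)$, followed by monotonicity of the integral, together with the identity $P_e = \int \min(w_1 p_1(x), w_2 p_2(x))\,\mathrm{d}x$ established earlier in the paper. Your added remark on the domain of the generator $f$ is a reasonable technical refinement that the paper leaves implicit.
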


In particular, by choosing the power means $M_{f_\beta}$ obtained for $f_\beta(x)=x^\beta$, we get a {\em tight bound} in the limit case since
 $M_{f_\beta}(p,q)\rightarrow \min(p,q)$ when $\beta\rightarrow-\infty$.
However, in order for the generalized affinity, distance and upper bound to be useful, we need to be able to compute them in {\em closed-form} for some statistical distribution families. 
We illustrate how to derive closed form formula for $\rho_f$ and closed form upper bounds on $P_e$ for several statistical distribution families.

The generalized Bhattacharyya upper bound $\rho_{\frac{1}{2}}^f$ is obtained by setting $\alpha=\frac{1}{2}$:
\begin{eqnarray} 
P_e &=& \int \min(w_1 p_1(x),w_2 p_2(x))\dx \leq  B(w_1 p_1(x),w_2 p_2(x)),\\
B_f(w_1 p_1(x),w_2 p_2(x)) &=& \int M_f\left(w_1p_1(x), w_2p_2(x); \frac{1}{2}\right) \dx.  
\end{eqnarray}

In order for the Chernoff information to improve over the Bhattacharyya bound, we need the quasi-arithmetic $\alpha$-weighted mean to be a {\em convex function} with respect to parameter $\alpha$.
For the geometric mean, we check that:

\begin{equation}
M_{f_g}(a,b;\alpha) = e^{\alpha \log\frac{a}{b} + \log b},
\end{equation}
is strictly convex with respect to $\alpha$ (since $\frac{\mathrm{d}^2}{\mathrm{d}\alpha^2}M_{f_g}(a,b;\alpha) = (\log\frac{a}{b})^2 M_{f_g}(a,b;\alpha)>0$ for $a\not =b$).
Similarly,  the weighted harmonic mean is convex with respect to $\alpha$ since $\frac{\mathrm{d}^2}{\mathrm{d}\alpha^2}M_{f_h}(a,b;\alpha) =2ab(a-b)^2(\alpha(a-b)+b)^{-3}>0$.

\begin{remark}
Note that not all quasi-arithmetic means yield convex weighted means.
Indeed, let $M_f'(a,b;\alpha)= f(\alpha(f^{-1}(a)-f^{-1}(b))+f^{-1}(b))$ 
then $\frac{\mathrm{d}^2}{\mathrm{d}\alpha^2}M_f'(a,b;\alpha)= (f^{-1}(a)-f^{-1}(b))^2 f''(\alpha(f^{-1}(a)-f^{-1}(b))+f^{-1}(b))$.
Functions $f$ and $f^{-1}$ are strictly monotonous but can be convex, concave, or arbitrary in general.
\end{remark}

\begin{remark}
Sometimes, we prefer to parameterize the weighted mean as the smooth interpolant from $a$ to $b$, when $\alpha$ varies from $0$ to $1$ (kind of geodesic parameterization).
In that case, we may prefer the parameterization $M_f(a,b;\alpha')=f^{-1}((1-\alpha') f(a)+\alpha' f(b))$.
This is not important for Bhattacharyya-type symmetric bounds nor for Chernoff-type bounds that optimize over the $\alpha$ range (or equivalently over the $\alpha'$ range).
\end{remark}


Let us now examine how the ``quasi-arithmetic bounding techniques'' apply for several families of statistical distributions.

\section{Some illustrating examples\label{sec:apps}}

\subsection{Geometric means and the Chernoff bound for exponential families}
First, we recall the well-known formula~\cite{Kailath-1967,2011-brbhat} for the case of exponential families.

In order to compute a closed form for the right-hand side of Eq.~\ref{eq:rhs}:

\begin{eqnarray}
P_e &\leq&  \int M_f(w_1p_1(x), w_2p_2(x); \alpha) \dx,\label{eq:rhs}\\
&\leq& \int f^{-1} ( \alpha  f(w_1p_1)+ (1-\alpha) f(w_2p_2) ) \dx,
\end{eqnarray}
we consider the {\em geometric mean} obtained for $f(x)=\log x$. Since $p_1(x)=\exp (x^\top \theta_1-F(\theta_1))$ and
$p_2(x)=\exp (x^\top \theta_2-F(\theta_2))$ belong to the  exponential families, we get:

\begin{eqnarray}
M_f(w_1p_1(x), w_2p_2(x); \alpha) &=& e^{\alpha\log w_1p_1(x) + (1-\alpha)\log w_2p_2(x)} ,\\
&=& w_1^{\alpha}w_2^{1-\alpha}p_1^\alpha(x)p_2^{1-\alpha}(x).
\end{eqnarray}
It follows that:

\begin{equation}
P_e \leq w_1^\alpha w_2^{1-\alpha} \int f^{-1} ( \alpha  f(p_1(x))+ (1-\alpha) f(p_2(x)) ) \dx.
\end{equation}

\begin{remark}
In fact, the geometric mean is a limit case of a family of linear-scale free means defined for $f_\alpha(x)=x^{\frac{1-\alpha}{2}}$ with $f_1(x)=\log x$. In general, in order to slide the {\em a priori} weights $w_1$ and $w_2$ out of the integral, we would like to use a homogeneous function $f$ (with $f(\lambda x)=g(\lambda)f(x)$).
\end{remark}

Furthermore, since $m_\alpha(x;\theta_1,\theta_2)= \alpha  \log(p_1(x))+ (1-\alpha) \log(p_2(x))= x^\top (\alpha \theta_1+(1-\alpha)\theta_2)-\alpha F(\theta_1)-(1-\alpha)F(\theta_2)$, we would like to get $f^{-1}(m_\alpha(x;\theta_1,\theta_2))$ as $c_{\theta_1,\theta_2;\alpha}p(x;\theta_{12}^{(\alpha)})$ so that we can slide the integral operand inside the expression, and use the fact that we recognize a member $\theta_{12}^{(\alpha)}$ of the exponential family so that its integration over the support is $1$.
For members of the {\em same} exponential family, we have:

\begin{eqnarray}
f^{-1}(m_\alpha(x;\theta_1,\theta_2)) &=& e^{F(\alpha\theta_1+(1-\alpha)\theta_2)-\alpha F(\theta_1)-(1-\alpha)F(\theta_2) } p(x;\alpha\theta_1+(1-\alpha)\theta_2),\\
&=& e^{-J_F^{(\alpha)}(\theta_1,\theta_2)} p(x;\underbrace{\alpha\theta_1+(1-\alpha)\theta_2}_{\theta_{12}^{(\alpha)}})
\end{eqnarray}

Thus

\begin{equation}
P_e \leq w_1^\alpha w_2^{1-\alpha} e^{-J_F^{(\alpha)}(\theta_1,\theta_2)} \int p(x;\alpha\theta_1+(1-\alpha)\theta_2) \dx.
\end{equation}
Since the natural parameter space $\Theta$ is convex for exponential families, we have $\theta_{12}^{(\alpha)} = \alpha\theta_1+(1-\alpha)\theta_2\in\Theta$ and therefore
 $\int p(x;\alpha\theta_1+(1-\alpha)\theta_2) \dx=1$.
We end up with:
 
\begin{equation}
P_e \leq  \min_{\alpha\in[0,1]} w_1^\alpha w_2^{1-\alpha} e^{-J_F^{(\alpha)}(\theta_1,\theta_2)}
\end{equation}

\begin{definition}
The $\alpha$-Chernoff distance is an asymmetric statistical distance defined by $\rho_\alpha(p_1,p_2)=-\log p_1^{\alpha}(x)p_2^{1-\alpha}(x)\dx \geq 0$.
The Bhattacharrya symmetric distance $\rho_{\frac{1}{2}}(p_1,p_2)$ is a particular member of the family of $\alpha$-Chernoff distances.
\end{definition}
Thus we always have $\rho_{\frac{1}{2}}(p_1,p_2) \geq \rho_*(p_1,p_2)=\min_{\alpha\in[0,1]} \rho_\alpha(p_1,p_2)$.

The optimal Chernoff bound is obtained  for the optimized weight $\alpha^*$ that has been characterized geometrically on the statistical manifold~\cite{ChernoffGeometry-2013}.

To derive other Chernoff-type upper bounds, we shall therefore consider non-exponential families of distributions. 
The most prominent family, to start with, is the Cauchy family (a member of the Student $t$-distribution families) and the multivariate Pearson type VII elliptical distributions~\cite{PearsonTypeVII-2010} (with its scaled multivariate $t$-distributions).

\subsection{Harmonic means and the Chernoff-type bound for Cauchy distributions\label{sec:Cauchy}}

Consider the family of Cauchy distributions  with density:

\begin{equation}
p(x;s) = \frac{1}{\pi} \frac{s}{x^2+s^2},
\end{equation}
defined over the support $\mathbb{R}$. 
This is a {\em scale family} with heavy tails indexed by a scale parameter $s$: 
\begin{equation}
p(x;s)=\frac{1}{s}p_0\left(\frac{x}{s}\right), \qquad p_0(x)=\frac{1}{\pi} \frac{1}{x^2+1},
\end{equation}
where $p_0(x)$ denotes the standard Cauchy distribution $C_0$.
Cauchy distributions do not belong to the exponential families.
(Indeed, the mean is undefined.)
Let us take the harmonic mean $M_H=M_f$ defined for the strictly monotonous generator $f(x)=f^{-1}(x)=\frac{1}{x}$.
We have:

\begin{eqnarray}
P_e = \int \min(w_1 p_1(x),w_2 p_2(x)) \dx \leq M_f(w_1 p_1(x),w_2 p_2(x); \alpha) \dx.
\end{eqnarray}

Wlog., to simplify calculations exhibiting the method, consider $w_1=w_2=\frac{1}{2}$.

\begin{eqnarray}
P_e &\leq& \int M_H(\frac{1}{2} p_1(x),\frac{1}{2} p_2(x); \alpha) \dx,\\
&\leq&  \frac{1}{2} \int \frac{p_1(x)p_2(x)}{(1-\alpha)p_1(x)+\alpha p_2(x)} \dx,\\
&\leq& \frac{1}{2}  \int \frac { \frac{s_1}{\pi(x^2+s_1^2)}  \frac{s_2}{\pi(x^2+s_2^2)} }{
(1-\alpha)\frac{s_1}{\pi(x^2+s_1^2)} + \alpha \frac{s_2}{\pi(x^2+s_2^2)}   } \dx,\\
&\leq& \frac{1}{2} \int \frac{s_1s_2}{\pi ((1-\alpha) s_1 (x^2 + s_2^2) + \alpha  s_2 (x^2 + s_1^2)   )} \dx,\\
&\leq& \frac{1}{2} \int \frac{s_1s_2}{\pi  ( ((1-\alpha) s_1+\alpha s_2)x^2 + (1-\alpha) s_1s_2^2+\alpha  s_2 s_1^2 )} \dx,\\
&\leq&\frac{1}{2} \frac{s_1s_2}{((1-\alpha) s_1+\alpha s_2) s_{\alpha}} \underbrace{\int \frac{1}{\pi} \frac{s_{\alpha}}{x^2+ s_{\alpha}^2} \dx}_{=1},\label{eq:cauchyfactor}
\end{eqnarray}
since $s_{\alpha}>0$ belongs to the parameter space $\Theta$, 
with:
\begin{equation}
s_{\alpha} =  \sqrt { \frac{(1-\alpha) s_1s_2^2+\alpha s_2s_1^2}{(1-\alpha) s_1+\alpha s_2}}.
\end{equation}

Thus the weighted  harmonic mean provides a Chernoff-type upper bound: 
\begin{equation}
P_e \leq  \frac{1}{2} \frac{s_1s_2}{((1-\alpha) s_1+\alpha s_2) \sqrt { \frac{(1-\alpha) s_1s_2^2+\alpha s_2s_1^2}{(1-\alpha) s_1+\alpha s_2}}}.
\end{equation} 
A sanity check $s_1=s_2=s$ shows that $P_e=\frac{1}{2}$, as expected (class-conditional distributions are not distinguishable).

The Bhattacharrya-type bound obtained for $\alpha=\frac{1}{2}$ yields:

\begin{equation}
s_{\frac{1}{2}} =  \sqrt{\frac{s_1s_2^2+s_2s_1^2}{s_1+s_2}}, 
\end{equation}
and the upper bound:
\begin{equation}
P_e\leq \frac{s_1s_2}{\sqrt{(s_1+s_2)(s_1s_2^2+s_2s_1^2)}}.
\end{equation}

The harmonic mean $M_H(a,b;\alpha)$ is {\em linear-scale free}: 
$M_H(\lambda a,\lambda b;\alpha)=\lambda M_H(a,b;\alpha)$.
Let $\lambda=\frac{s_2}{s_1}$. Then we write the probability of error as:

\begin{equation}\label{eq:cauchyalpha}
P_e \leq \frac{1}{2} \frac{\lambda}{\sqrt{(1-\alpha+\alpha\lambda)((1-\alpha)\lambda^2+\alpha\lambda)}} =P_e^{(\alpha)}.
\end{equation}

In particular, we upper bound the probability of error by the following Bhattacharyya bound:
\begin{equation}
P_e \leq P_e^{(\frac{1}{2})} =\frac{\sqrt{\lambda}}{\lambda+1}.
\end{equation}

The Chernoff-type bound proceeds by minimizing Eq.~\ref{eq:cauchyalpha} with respect to $\alpha$.
We have $P_e^{(0)}=P_e^{(1)}=\frac{1}{2}$. To find the minimum value over the $\alpha$-range $[0,1]$, we study function $P_e^{(\alpha)}$.
Using a computer-algebra system\footnote{Namely, Wolfram Alpha online, \url{http://www.wolframalpha.com}}, we find that it is a convex function that always admits a minimum at $\alpha=\frac{1}{2}$. 
Indeed, the derivative of $P_e$ with respect to $\alpha$ is:

\begin{equation}
\frac{\mathrm{d}}{\mathrm{d}\alpha} P_e(\alpha;\lambda)= 
\frac{\frac{1}{4} (\lambda-1)^2 \lambda^2 (2\alpha-1) }{(\lambda ((\lambda-1)\alpha+1) (-\alpha\lambda+\lambda+\alpha))^{3/2}},
\end{equation} 
that is zero if and only if $\alpha=\frac{1}{2}$.
This is a remarkable example that shows that the Chernoff  bound  amounts to the Bhatthacharrya bound.

Let us compute the total variation distance between two scaled Cauchy distributions $a_1 p(x;s_1)$ and $a_2 p(x;s_2)$
defined over the real-line support $\mathbb{R}$ with:
\begin{equation}
p(x;s) = \frac{1}{\pi} \frac{s}{x^2+s^2}
\end{equation}
For $s_1\not = s_2$, the two distinct  positive densities intersect in exactly two values of the support:

\begin{eqnarray}
x_{1} &=& - \frac{\sqrt{s_1s_2(a_2 s_1- a_1s_2)}}{\sqrt{a_1s_1-a_2s_2}} , \\ 
x_{2} &=&  \frac{\sqrt{s_1s_2(a_2s_1-a_1s_2)}}{\sqrt{a_1s_1-a_2s_2}} 
\end{eqnarray}
In particular, when $a_1=a_2=1$ (ie., probability densities), we have $x_1=-\sqrt{s_1s_2}$ and $x_2=\sqrt{s_1s_2}$.
By abuse of notations, we let $x_0=-\infty$ and $x_3=\infty$, and apply the generic 1D total variation formula of Eq.~\ref{eq:tv1d} with $k=2$:

\begin{equation}
\TV(a_1 p_1,a_2 p_2) = \frac{1}{2} \sum_{i=1}^{k+1} \left|   (P_1(x_i)+P_2(x_{i+1})-P_1(x_{i-1})-P_2(x_i))\right|. 
\end{equation}

Since the Cauchy scaled cumulative distribution is $P_i(x)=a_i (\frac{1}{\pi} \arctan(\frac{x}{s_i}) + \frac{1}{2})$ with $P_i(x_0)=0$ and  $P_i(x_3)=a_i$.

In particular, the probability of error when $w_1=w_2=\frac{1}{2}$ is:
\begin{eqnarray}
P_e &=& \frac{1}{2}(1-\TV(p_1,p_2)),\\
\TV(p_1,p_2) &=& |P_1(\sqrt{s_1s_2})-P_1(-\sqrt{s_1s_2})-P_2(\sqrt{s_1s_2})+P_2(-\sqrt{s_1s_2})|.
\end{eqnarray}
That is, we find an exact analytic expression for the total variation (and hence for Bayes error $B_e$ and the probability of error $P_e$):
\begin{eqnarray}
\lefteqn{\TV(p_1,p_2) = \frac{1}{\pi} \left ( \arctan\left(\sqrt{\frac{s_2}{s_1}}\right)-\arctan\left( -\sqrt{\frac{s_2}{s_1}}\right)\right.}\nonumber\\
&& 
\left. +\arctan \left(-\sqrt{\frac{s_1}{s_2}}\right) -\arctan \left(\sqrt{\frac{s_1}{s_2}}\right) \right).
\end{eqnarray}

Using the identity $\arctan(x)+\arctan(1/x)=\frac{\pi}{2}$ and the fact that $\arctan(-x)=-\arctan(x)$, we get a more compact formula:

\begin{equation}
\TV(p_1,p_2) = \frac{2}{\pi}\left( \arctan\left(\sqrt{\frac{s_2}{s_1}}\right)-\arctan\left(\sqrt{\frac{s_1}{s_2}}\right)  \right).
\end{equation}

\begin{remark}
Note that for Cauchy distributions with scale parameter $s_1$ and $s_2$, we have $\TV(s_1,s_2)=\TV(\lambda s_1,\lambda s_2), \forall \lambda>0$ (because $\sqrt{\frac{\lambda s_1}{\lambda s_2}} = \sqrt{\frac{s_1}{s_2}}$). Therefore, we may renormalize by considering $s_1\leftarrow 1$ and $s_2\leftarrow \frac{s_2}{s_1}$.
\end{remark}

It follows that the probability of error is:

\begin{eqnarray}
P_e &=& \frac{1}{2}-\frac{1}{\pi} \left(\arctan(\sqrt{\lambda})-\arctan(\sqrt{1/\lambda}) \right),\\
&=&1-\frac{2}{\pi}\arctan(\sqrt{\lambda}),\quad \lambda=\frac{s_2}{s_1}.
\end{eqnarray}

Consider the following numerical example: $s_1=10$ and $s_2=50$ ($w_1=w_2=\frac{1}{2}$). 
Then,  we get $P_e\sim 0.2677$, the Bhattacharyya-type bound $B \sim 0.3726$ and the Chernoff-type bound $C=B$.

\begin{remark}
Let us study the tightness of the   upper bound.  We can express analytically the gap as
$\Delta = C-P_e = C-\frac{1}{2}+\TV(p_1,p_2)$. That is, we get:

\begin{equation}
\Delta(\lambda)=\frac{\sqrt{\lambda}}{1+\lambda}-1+\frac{2}{\pi}\arctan(\sqrt{\lambda})>0.
\end{equation}

Note that $\lambda=1$, we have $\Delta(1)=0$: That is, the gap is tight when distributions coincide.
Using a computer-algebra system, we find that the gap is maximized at $\lambda=\frac{2+\pi}{\pi-2}\sim 4.5$ and $\Delta_{\max}\sim 0.1$.
\end{remark}

This Cauchy case study illustrates well that the geometric mean may not always be the most appropriate weighted mean to use to upper bound $P_e$.
Note that for Cauchy distributions, we also obtained an analytic form of $P_e$ and $B_e$ using the total variation distance expressed using the cumulative distribution.

We showed that the harmonic mean is tailored to derive closed-form bound for the Cauchy distributions.
However, we may apply the harmonic mean to other distributions. This has in fact be done in the literature detailed in the following remark:
\begin{remark}
The harmonic bean $M_H(a,b)=\frac{2ab}{(a+b)}$ has been used for defining the {\em nearest neighbor error bound}~\cite{Fukunaga-1990}, always better than the Bhattacharrya bound (obtained for $\alpha=\frac{1}{2}$). 
\end{remark}
(Note that for exponential families, the nearest neighbor error bound is not available in closed-form.)

\subsection{Pearson type VII distributions\label{sec:Pearson}}

Consider the $d$-dimensional elliptically symmetric Pearson type VII distribution~\cite{PearsonTypeVII-2010,BayesEllipticalDist-2012})   with density:

\begin{equation}
p(x;\mu,\Sigma,\lambda) = \pi^{-\frac{d}{2}} \frac{\Gamma(\lambda)}{\Gamma(\lambda-\frac{d}{2})} |\Sigma|^{-\frac{1}{2}} (1+(x-\mu)^\top \Sigma^{-1} (x-\mu))^{-\lambda}
\end{equation}
where $\Gamma(z)=\int_0^\infty t^{z-1} e^{-t}\dt$ denotes the Gamma function extending the factorial function (i.e., $\Gamma(1)=1$, $\Gamma(x+1)=x\Gamma(x)$ and $\Gamma(n)=(n-1)!$ for $n\in\mathbb{N}$). Parameter $\lambda>\frac{d}{2}$ represents the degree of freedom~\cite{PearsonTypeVII-2010}.

For sake of simplicity, wlog., let us consider the zero-centered sub-family with $\mu=0$, $v=1$ and $\lambda>\frac{d}{2}$ fixed.
This sub-family is defined on the cone of symmetric positive definite matrices $\Theta=\{\Sigma\ |\ \Sigma\succ 0\}$ with density:
\begin{equation}
p(x;\Sigma)=  c_d(\lambda) |\Sigma|^{-\frac{1}{2}} (1+x^\top \Sigma^{-1}x)^{-\lambda},
\end{equation}
where $c_d(\lambda)=\pi^{-\frac{d}{2}} \frac{\Gamma(\lambda)}{\Gamma(\lambda-\frac{d}{2})}$ is the normalizing constant~\cite{NormalizingConstantDistribution-2012}.
This multivariate family does not belong to the exponential families.
Consider the $\alpha$-weighted $f$-mean with $f(x)=x^{-\frac{1}{\lambda}}$, for prescribed $\lambda>\frac{d}{2}$ (and $f^{-1}(x)=x^{-\lambda}$).

We have:

\begin{eqnarray}
\alpha f(p_1) &=& \alpha c_d(\lambda)^{-1/\lambda} |\Sigma_1|^{\frac{1}{2\lambda}} (1+x^\top \Sigma_1^{-1}x),\\
(1-\alpha) f(p_2)&=& (1-\alpha) c_d(\lambda)^{-1/\lambda} |\Sigma_2|^{\frac{1}{2\lambda}} (1+x^\top \Sigma_1^{-2}x).
\end{eqnarray}
Let $c_1=c_d(\lambda)^{-1/\lambda} |\Sigma_1|^{\frac{1}{2\lambda}} $ and $c_2=c_d(\lambda)^{-1/\lambda} |\Sigma_2|^{\frac{1}{2\lambda}} $.
Denote by:

\begin{eqnarray}
c_\alpha &=&\alpha c_1+(1-\alpha)c_2,\\
&=& c_d(\lambda)^{-1/\lambda} (\alpha  |\Sigma_1|^{\frac{1}{2\lambda}}  + (1-\alpha) |\Sigma_2|^{\frac{1}{2\lambda}}),
\end{eqnarray}
we get:

\begin{eqnarray}
\alpha f(p_1) +(1-\alpha) f(p_2) &=& \alpha c_1(1+x^\top \Sigma_1^{-1} x) + (1-\alpha)c_2(1+x^\top \Sigma_2^{-1} x),\\
&=& c_\alpha (1 + x^\top \Sigma_\alpha^{-1} x),
\end{eqnarray}
with

\begin{eqnarray}
\Sigma_\alpha^{-1} &=& \frac{\alpha c_1\Sigma_1^{-1}+(1-\alpha)c_2\Sigma_2^{-1}}{c_\alpha},\\
&=& \frac{\alpha |\Sigma_1|^{\frac{1}{2\lambda}} \Sigma_1^{-1}+(1-\alpha)|\Sigma_2|^{\frac{1}{2\lambda}} \Sigma_2^{-1}}{  (\alpha |\Sigma_1|^{\frac{1}{2\lambda}} + (1-\alpha)|\Sigma_2|^{\frac{1}{2\lambda}})} \succ 0.
\end{eqnarray}

Therefore,

\begin{eqnarray}
f^{-1}(\alpha f(p_1) +(1-\alpha) f(p_2))&=&  c_\alpha^{-\lambda} (1 + x^\top \Sigma_\alpha^{-1} x)^{-\lambda},\\
&=& c_\alpha^{-\lambda} |\Sigma_\alpha|^{\frac{1}{2}} \frac{1}{c_d(\lambda)} p(x;\Sigma_\alpha)
\end{eqnarray}

It follows that:

\begin{eqnarray}
P_e &\leq&  \frac{1}{2} (\alpha  |\Sigma_1|^{\frac{1}{2\lambda}}  + (1-\alpha) |\Sigma_2|^{\frac{1}{2\lambda}})^{-\lambda} |\Sigma_\alpha|^{\frac{1}{2}} \underbrace{\int p(x;\Sigma_\alpha)\dx}_{=1},\\
&=& \frac{1}{2} (\alpha  |\Sigma_1|^{\frac{1}{2\lambda}}  + (1-\alpha) |\Sigma_2|^{\frac{1}{2\lambda}})^{-\lambda} |\Sigma_\alpha|^{\frac{1}{2}}.
\end{eqnarray}
since $\Sigma_\alpha\in\Theta$.

The Pearson type VI distribution is related to the multivariate $t$-distributions~\cite{NormalizingConstantDistribution-2012}.

\subsection{Central multivariate $t$-distributions\label{sec:mvt}}
The  multivariate $t$-distribution (MVT, centered at $\mu=0$) with $\nu\geq 1$ {\em degrees of freedom} is defined for a positive definite matrix $\Sigma\succ 0$ (the {\em scale matrix}) by the following density:

\begin{equation}
p(x;\Sigma) = c_{d,\nu} |\Sigma|^{-\frac{1}{2}} \left(1+\frac{1}{\nu} x^\top \Sigma^{-1} x \right)^{-\frac{\nu+d}{2}},
\end{equation}
where $c_{d,\nu}=\frac{\Gamma(\frac{\nu+d}{2})}{\Gamma(\frac{\nu}{2})(\nu\pi)^{\frac{d}{2}}}$ is the  constant  normalizing the distribution.
The covariance matrix  is $\frac{\nu}{\nu-2}\Sigma$.

Let $t=-\frac{\nu+d}{2}$, and consider $f(x)=x^{\frac{1}{t}}$, with functional inverse $f^{-1}(x)=x^t$.
Using a technique similar to the Pearson bound, after massaging the mathematics, we find that (for $w_1=w_2=\frac{1}{2}$) 
$P_e \leq \frac{1}{2}\rho^{\mathrm{MVT}}_\alpha(\Sigma_1,\Sigma_2)$ (for $\alpha\in [0,1]$),
with:

\begin{equation}\label{eq:mvtalpha}
\rho^{\mathrm{MVT}}_{\alpha}(\Sigma_1,\Sigma_2) = (\alpha |\Sigma_1|^{-\frac{1}{2t}} + (1-\alpha) |\Sigma_2|^{-\frac{1}{2t}}  )^t  |\Sigma_{\alpha}'|^{\frac{1}{2}},
\end{equation}
and

\begin{equation}
\Sigma_{\alpha}' = \left( \frac{\alpha |\Sigma_1|^{-\frac{1}{2t}}\Sigma_1^{-1} + (1-\alpha) |\Sigma_2|^{-\frac{1}{2t}} \Sigma_2^{-1} }{\alpha |\Sigma_1|^{-\frac{1}{2t}} + (1-\alpha) |\Sigma_2|^{-\frac{1}{2t}} }  \right)^{-1}.
\end{equation}

Note that when $\nu\rightarrow\infty$, and $t=-\frac{2}{\nu+d}\rightarrow 0$, the multivariate $t$-distribution (MVT) tend to a multivariate Normal distribution (MVN) with covariance matrix $\Sigma$. The power mean induced by  $f(x)=x^{\frac{1}{t}}$ (with $t=-\frac{\nu+d}{2}$) tends to the geometric mean, and we get  
 the well-known Bhattacharyya coefficient bound (for $\alpha=\frac{1}{2}$) on central multivariate Gaussians~\cite{2011-brbhat} (see Eq.~\ref{eq:rhomvn}):
 
\begin{equation}
P_e \leq \frac{1}{2} \rho^{\mathrm{MVN}}(\Sigma_1,\Sigma_2),\quad
\rho^{\mathrm{MVN}}(\Sigma_1,\Sigma_2) = \frac{|\Sigma_1|^{\frac{1}{4}}|\Sigma_2|^{\frac{1}{4} }}{|\frac{1}{2}\Sigma_1 + \frac{1}{2}\Sigma_2|^{\frac{1}{2}}}.
\end{equation}

\subsection{Assessing the Bhattacharyya-type and the Chernoff-type upper bounds\label{sec:tightness}}

Let us recall the inequality on the probability of error $P_e$ between two distributions $p_1$ and $p_2$ with equal prior ($w_1=w_2=\frac{1}{2}$):

\begin{equation}
P_e(p_1,p_2) = \frac{1}{2}\left(1-\TV(p_1,p_2) \right)  \leq   \frac{1}{2} \rho_*^{f}(p_1,p_2) \leq  \frac{1}{2} \rho^{f}(p_1,p_2) \leq \frac{1}{2}.
\end{equation}

The left hand side has been elucidated in Eq.~\ref{eq:petv} and the right hand side is the Chernoff-type/Bhattacharyya-type similarity coefficients which should be available in closed-form for fast calculation.
We are interested in characterizing the gaps $\Delta=\rho(p_1,p_2)-P_e$ and $\Delta_*=\rho_*(p_1,p_2)-P_e$ between the Bhattacharyya/Chernoff upper bounds  and $P_e$  (with $\Delta_*\leq \Delta$). 
We start with some simple cases of univariate distributions, where $P_e$ can be expressed analytically, and then considered the multivariate distributions where $P_e$ need to be stochastically estimated.

\subsubsection{Simple cases: Univariate distributions}

For univariate densities,  we may calculate the total variation by computing the roots of $p_1(x)=p_2(x)$ and then using the cumulative distributions $P(t)=\int p(x\leq t) \dx$ to explicit a formula.
Assume $x_1, ..., x_k$ are the $k$ roots, and by abuse of notations, let $x_0=x_{\min}$ and $x_{k+1}=x_{\max}$ denote the extra endpoints of the distribution support ($x\in[x_{\min},x_{\max}]$). When the support is the full real line ($\supp(p_i)=\mathbb{R}$), we set $x_{\min}=-\infty$ and $x_{\max}=\infty$.
We have:

\begin{eqnarray}
\TV(p_1,p_2) &=& \frac{1}{2}   \sum_{i=1}^{k+1}  \left| \int_{x_{i-1}}^{x_{i}} (p_1(x)-p_2(x)) \dx\right|,\\
&=& \frac{1}{2} \left|\sum_{i=1}^{k+1}    (P_1(x_i)+P_2(x_{i+1})-P_1(x_{i-1})-P_2(x_i))\right|. \label{eq:tv1d}
\end{eqnarray}

This scheme allows one to compute the total variation distance of many {\em univariate} distributions like the Gaussian (see Eq.~\ref{eq:unitv}), Rayleigh, Cauchy, etc distributions. Therefore, we get analytic expressions of Bayes error relying on the {\em cumulative distribution function} (CDF) for many univariate distributions.

Those the Bhattacharyya gaps for the Cauchy or 1D Gaussians can be written analytically.
Section~\ref{sec:Cauchy} already addressed the gap for the Cauchy distributions.
Next, we consider the general multivariate case (that includes those univariate examples) and report on our numerical experiments.

\subsubsection{The case of multivariate distributions}
We consider multivariate $t$-distributions~\cite{mvtnorm-2013,mvt-2009} (MVT) that includes  
the multivariate normal (MVN) distributions in the limit case (when the number of degree of freedom tends to infinity).
To perform experiments, we used the {\tt mvtnorm} package\footnote{\url{http://cran.r-project.org/web/packages/mvtnorm/index.html}. To install the package, we used the command line: {\tt install.packages('mvtnorm\_0.9-9996.zip', repos = NULL)}} on the R software platform.\footnote{R can be freely downloaded at \url{http://www.r-project.org/}}

Since  the total variation (TV) does not admit a closed-form formula (nor the probability of error $P_e$), we estimate those quantities by 
performing stochastic integrations as follows:

\begin{eqnarray}
\widehat{P_e}(p_1,p_2)&=&\frac{1}{2}(1-\widehat{\TV}(p_1,p_2)),\\
  \widehat{\TV}(p_1,p_2) &=& \frac{1}{2n} \sum_{i=1}^n \frac{1}{p_1(x)}\left|p_1(x)-p_2(x)\right|,\\
  &=& \frac{1}{2n} \sum_{i=1}^n \left|1-\frac{p_2(x_i)}{p_1(x_i)}\right|,
\end{eqnarray}
where $x_1, ..., x_n$ are $n$ identically and independently variates of $p_1$.
Stochastic integration guarantees convergence to the true value in the limit: $\lim_{n\rightarrow\infty} \widehat{P_e}(p_1,p_2)=P_e(p_1,p_2)$.

To give a numerical example, consider central bidimensional $t$-distributions with $\nu=6$ and scale matrices $\Sigma_1=I$ and $\Sigma_2=10I$, where $I$ denotes the identity matrix. Running the {\tt  mvtTotalVariation(df,sigma1,sigma2,n)} code (see Appendix), we get
the following estimates for $\widehat{TV}$:  0.3210709 ($n=100$), 0.3479519 ($n=1000$), 0.3472926 ($n=10000$), 0.347538 ($n=100000$).
We chose $n=10000$ in the following experiments.

Consider central $t$-distributions ($\mu=0$).
We implemented the closed-form formula of Eq.~\ref{eq:mvtalpha} to calculate the $\alpha$-Chernoff coefficient
$\rho^{\mathrm{MVT}}_{\alpha}(\Sigma_1,\Sigma_2)$. The optimal Chernoff coefficient (and the exponent $\alpha^*$) is approximated by discretizing into 1000 steps the unit range for $\alpha$.
We consider $\Sigma_1=I$ and $\Sigma_2=\lambda I$ (with $\nu=6$) for $\lambda=d+1$ and various values of the dimension. 
(Indeed, after an appropriate ``whitening'' transformation~\cite{Fukunaga-1990}, we may assume wlog. that one parameter matrix is the identity while the other is diagonal.)
We report the experimental results  in Table~\ref{tab:mvt}.

 \begin{table}
\caption{Experimental results for central multivariate $t$-distributions (MVT) with $\nu=6$, $\Sigma_1=I$ and $\Sigma_2=\lambda I$.
We have: $\widehat{P_e} \leq \frac{1}{2}\rho_*^{\mathrm{MVT}}\leq \frac{1}{2}\rho^{\mathrm{MVT}}$ where $\rho_*^{\mathrm{MVT}}$ and  $\rho^{\mathrm{MVT}}$ denotes the Chernoff and Bhattacharyya coefficient, respectively. Observe that the Chernoff upperbound is tighter (but of the same order) than the  Bhattacharyya bound (for $\alpha^*\not =\frac{1}{2}$). Those upper bounds improve over the naive $\frac{1}{2}$ bound.
\label{tab:mvt}}
$$
\begin{array}{ll|llll}
\mathrm{dimension} & \lambda & \widehat{P_e} & \rho_{*}^{\mathrm{MVT}} &  \widehat{\alpha^*}& \rho^{\mathrm{MVT}} \\ \hline
d= 2  &\lambda= 3  & 0.3302  & 0.4471817 & 0.455  & 0.4475742 \\
d= 3 & \lambda= 4  & 0.2578  & 0.3951277 & 0.462  & 0.3956298 \\
d= 5 & \lambda= 6  & 0.16215  & 0.2943599 & 0.487  & 0.2944589\\ 
d= 10&  \lambda= 11  & 0.06045  & 0.1400655 & 0.548  & 0.141438 \\
d= 15 & \lambda= 16  & 0.02845  & 0.07442729 & 0.592  & 0.07841622 \\
d= 20 & \lambda= 21  & 0.0167  & 0.04396945  & 0.625  & 0.04945252 \\
\end{array}  
$$
\end{table}

\section{Conclusion\label{sec:concl}}
In this paper, we first reported a formula relating the Bayes error $B_e$ (including the probability of error $P_e$) to the total variation metric  $\TV$ defined on scaled distributions (see Theorem~1).
Second, we elucidated the Chernoff upper bound mechanism based on generalized weighted means:
Chernoff~\cite{Chernoff-1952} used the fact that $\min(a,b)\leq a^{\alpha}b^{1-\alpha}$ for $a,b>0$ and $\alpha\in[0,1]$ to derive an upper bound that turned out to be well-suited to the structure of exponential families~\cite{Kailath-1967}.
We interpreted the right-hand side of this   inequality as a  weighted geometric mean, and considered extending the upper bound construction using generalized weighted mean $M(a,b;\alpha)$. 
A mean $ M(a,b;\alpha)$ is indeed guaranteed to fall
 within its extrema by definition, thus yielding the bounds: $\min(a,b)\leq M(a,b;\alpha)\leq \max(a,b)$.
We considered the family of quasi-arithmetic means~\cite{meanvalueaxiomatization-1948,mean-kolmogorov-1930,mean-nagumo-1930} and showed how to derive new upper bounds by {\em coupling} the structure of the generalized mean with the structure of the probability distribution family at hand. 
We illustrated our method by considering three examples: The univariate Cauchy distributions, and the multivariate Pearson type VII and $t$-distributions (that includes the multivariate normal distributions in the limit case). For those families, we designed new affinity coefficients upper bounding $B_e$. The best value $\alpha^*$ of $\alpha$ yielding the tightest coefficient can be found by optimization on the statistical manifold~\cite{ChernoffGeometry-2013}.
We carried out numerical experiments that show that those novel upper bounds are helpful because not too distant to Bayes error (although not very tight), specially because they can be calculated in constant time using closed-form formula.
Otherwise, for more precise approximations,  the Bayes error can be estimated using computationally-intensive stochastic integrations.

Last but not least, this paper revealed novel interactions between the Bayes error  and statistical divergences: We show how to design
$P_e(p_1,p_2)\leq \rho^{f}_\alpha(p_1,p_2)$ upper bounds where  $\rho^{f}_\alpha$ is an affinity coefficient derived from a weighted quasi-arithmetic mean $M_f$ (skewed for Chernoff type and symmetric for a Bhattacharrya type).
Since we can transform any affinity coefficient $\rho^{f}_\alpha$ into a corresponding divergence by defining $D^{f}_\alpha=-\log \rho^{f}_\alpha$, we deduce that those novel statistical divergences $D^{f}_\alpha$ can be used to upper bound the probability of error: $P_e(p_1,p_2)\leq e^{-D^{f}_\alpha(p_1,p_2)}$.

\section*{Acknowledgments}
The author is grateful for the valuable comments of the Reviewers that led to this revised work.

\appendix

\section{Bayes error for class-conditional Gaussians\label{app:mvn}}

We summarize the formula for the Bayes error $B_e$ and the probability of error $P_e$ when dealing with univariate and multivariate normal  class-conditional distributions:
\begin{eqnarray}
B_e &=& \frac{a_1+a_2}{2}-\TV(a_1p_1,a_2p_2),\\
P_e &=& \frac{1}{2}-\TV(w_1p_1,w_2p_2),
\end{eqnarray}
where $a_1=w_1(c_{11}+c_{21})$ and $a_2=w_2(c_{12}+c_{22})$ are the positive weights derived from the cost design matrix and the {\em a priori} class weights.

First, consider the family of univariate normal distributions $\mathcal{F}=\{N(\mu,\sigma)\ |\ \mu\in\mathbb{R},\sigma\in\mathbb{R}_+\}$. 
Consider two distinct normal densities $p_1$ and $p_2$.
The two weighted densities $a_1 p_1(x) = a_2 p_2(x)$ intersect at exactly two positions when $\sigma_1\not =\sigma_2$ or in exactly one position, otherwise:
\begin{equation}
\frac{a_1}{\sigma_1\sqrt{2\pi}} e^{-\frac{1}{2}(\frac{x-\mu_1}{\sigma_1})^2} = 
\frac{a_2}{\sigma_2\sqrt{2\pi}} e^{-\frac{1}{2}(\frac{x-\mu_2}{\sigma_2})^2}. 
\end{equation}

Finding the roots amounts to solve the quadratic equation:

\begin{equation}
\left(\frac{x-\mu_1}{\sigma_1}\right)^2 - \left(\frac{x-\mu_2}{\sigma_2}\right)^2  -2\log \frac{a_1\sigma_2}{\sigma_1 a_2} = 0.
\end{equation}

When $\sigma_1=\sigma_2=\sigma$ (with $\mu_1\not= \mu_2$), we have one root:

\begin{equation}
x_1= \frac{\mu_1^2-\mu_2^2-2\log \frac{a_1\sigma_2}{\sigma_1 a_2}}{2(\mu_1-\mu_2)}.
\end{equation}

We find:

\begin{equation}
\TV(a_1p_1,a_2p_2) = \frac{1}{2} |a_2 \Phi(x_1;\mu_2,\sigma_2)- a_1 \Phi(x_1;\mu_1,\sigma_1)|,
\end{equation}
where $\Phi(x;\mu,\sigma)=\frac{1}{2}(1+\erf(\frac{x-\mu}{\sigma\sqrt{2}}))$ is the cumulative distribution, and $\erf(x)=\frac{1}{\sqrt{\pi}} \int_{-x}^x e^{-t^2} \dt$ denotes the error function.
That is,

\begin{eqnarray}
B_e &=& \frac{a_1+a_2}{2}-\frac{1}{2} \left|a_2 \erf(\frac{x_1-\mu_2}{\sigma\sqrt{2}}) - a_1 \erf (\frac{x_1-\mu_1}{\sigma\sqrt{2}})\right|,\\
x_1 &=& \frac{\mu_1^2-\mu_2^2-2\log \frac{a_1\sigma_2}{\sigma_1 a_2}}{2(\mu_1-\mu_2)}.
\end{eqnarray}

When $\sigma_1\not=\sigma_2$, the quadratic equation expands as $ax^2+bx+c=0$, and we have two distinct roots $x_1$ and $x_2$:

\begin{eqnarray}
a&=& \frac{1}{\sigma_1^2}-\frac{1}{\sigma_2^2},\\
b&=&  2\left(\frac{\mu_2}{\sigma_2}-\frac{\mu_1}{\sigma_1}\right)\\
c&=&  \left(\frac{\mu_1}{\sigma_1}\right)^2 - \left(\frac{\mu_2}{\sigma_2}\right)^2 -2\log \frac{a_1\sigma_2}{a_2\sigma_1}  \\
x_1&=&\frac{-b-\sqrt{\Delta}}{2a},\quad x_2=\frac{-b+\sqrt{\Delta}}{2a},
\end{eqnarray}
with $\Delta=b^2-4ac\geq 0$ and the total variation writes as follows:

\begin{eqnarray}\label{eq:unitv}
\lefteqn{\TV(a_1 p_1, a_2 p_2) =}\nonumber \\
&& \frac{1}{2} \left( 
\left|
\erf\left(\frac{x_1-\mu_1}{\sigma_1\sqrt{2}}\right)-\erf\left(\frac{x_1-\mu_2}{\sigma_2\sqrt{2}}\right)
\right|
+
\left|
\erf\left(\frac{x_2-\mu_1}{\sigma_1\sqrt{2}}\right)-\erf\left(\frac{x_2-\mu_2}{\sigma_2\sqrt{2}}\right)
\right|
\right)
\end{eqnarray}

Those formula generalize the probability of error reported in~\cite{BayesError1DNormal-2008}, p. 1375 to the most general case.

Second, consider the family of  multivariate normals distributions  
$\{ N(\mu,\Sigma)\ |\ \mu\in\mathbb{R}^d, \Sigma\succ 0\}$.
For densities $p_1$ and $p_2$ having the same covariance matrix $\Sigma$, the probability of error is reported in~\cite{BayesErrorNormal-2013}, even for degenerate covariance matrices $\Sigma$ by taking the pseudo-inverse matrix $\Sigma^{+}$:

\begin{equation}
P_e =  \frac{1}{2} - \frac{1}{2} \erf\left( \frac{1}{2\sqrt{2}} \| (\Sigma^+)^{\frac{1}{2}}(\mu_2-\mu_1) \|\right).
\end{equation}

When covariance matrices are distinct ($\Sigma_1\not =\Sigma_2$) but {\em linear classifiers} are considered, we also get  a closed-form formula~\cite{BayesErrorGaussian-2005} for the probability of error.
Otherwise, for the general case of {\em quadratic classifiers} of multivariate normals with distinct covariance matrices, no analytical formula is known. The best way to compute the probability of error is then by performing {\em 1D integration} of the {\em conditional density} of the {\em discriminant function}~\cite{Fukunaga-1990}.

Note that since two matrices can always be simultaneously diagonalized~\cite{Fukunaga-1990}, it is enough to consider the case of two Gaussians with the first covariance being set to the identity matrix $I$ and the second covariance matrix set to a diagonal matrix $\Lambda$.

\appendix

\section{R code}

\lstinputlisting{TestMVR.R}

\end{document}